\newcommand{\bx}{\mathbf{x}}
\newcommand{\btheta}{\mathbf{\theta}}
\newcommand{\bz}{\mathbf{z}}
\newcommand{\bp}{\mathbf{p}}
\newcommand{\bc}{\mathbf{c}}
\newcommand{\br}{\mathbf{r}}
\newcommand{\bC}{\mathbf{C}}
\newcommand{\by}{\mathbf{y}}
\newcommand{\bM}{\mathbf{M}}
\newcommand{\bI}{\mathbf{I}}
\newcommand{\bV}{\mathbf{V}}
\newcommand{\bw}{\mathbf{w}}
\newcommand*\diff{\mathop{}\!\mathrm{d}}
\newcommand{\cN}{\mathcal{N}}
\newcommand{\bv}{\mathbf{v}}
\newcommand{\bh}{\mathbf{h}}
\newcommand{\cD}{\mathcal{D}}
\newcommand{\cB}{\mathcal{B}}
\definecolor{dgreen}{rgb}{0,.7,0}
\definecolor{dyellow}{rgb}{.7,.7,0}
\definecolor{dred}{rgb}{.7,0,0}
\definecolor{dblue}{rgb}{0,0,0.7}
\newtheorem{theorem}{Theorem}[section]
\newtheorem{proposition}[theorem]{Proposition}
\newenvironment{proof}[1][Proof]{\begin{trivlist}
\item[\hskip \labelsep {\bfseries #1}]}{\end{trivlist}}
\title{Asynchronous Stochastic Gradient MCMC with Elastic Coupling}
\author{
  Jost Tobias Springenberg \;\;  Aaron Klein \;\; Stefan Falkner \;\; Frank Hutter   \\
  Department of Computer Science, University of Freiburg\\
  \texttt{\{springj,kleinaa,sfalkner,fh\}@cs.uni-freiburg.de} \\
}
\begin{document}

\maketitle

\begin{abstract}
  We consider parallel asynchronous Markov Chain Monte Carlo (MCMC)
  sampling for problems where we can leverage (stochastic) gradients
  to define continuous dynamics which explore the target
  distribution. We outline a solution strategy for this setting
  based on stochastic gradient Hamiltonian Monte Carlo sampling
  (SGHMC) which we alter to include an elastic coupling term
  that ties together multiple MCMC instances. The proposed strategy turns
  inherently sequential HMC algorithms into asynchronous parallel
  versions. First experiments empirically show that the resulting parallel sampler
  significantly speeds up exploration of the target distribution, when
  compared to standard SGHMC, and is less prone to the harmful effects
  of stale gradients than a naive parallelization approach.
\end{abstract}

\section{Introduction and Background}
Over the last years the ever increasing complexity of machine learning (ML)
models together with the increasing amount of data that is available to train them has resulted in a
great demand for parallel training and inference algorithms that can
be deployed over many machines. 
To meet this demand, ML practitioners have increasingly relied on
(asynchronous) parallel stochastic gradient descent (SGD) methods
for optimizing model parameters \citep{Recht2011,Dean2012,Zhang2015}. In contrast to this, the
literature on efficient parallel methods for sampling from the posterior over model parameters given some data is much more scarce. Examples of algorithms for this setting typically are constrained to specific model classes \citep{Ahmed2012,Ahn2015,Simsekli2015} or ``only'' consider
data parallelism \citep{AhnSW14}; in summary -- with the
exception of recent work by \citet{chen2016} -- a general sampling pendant
of asynchronous SGD methods is missing.
 
In this paper we consider the general problem of sampling from an arbitrary
posterior distribution over parameters $\btheta \in \mathbb{R}^{n}$ given
a set of observed datapoints $\bx \in \cD$ where we have $K$ machines
available to speed up the sampling process. Without loss of generality
we can write the mentioned posterior as $p(\btheta | \cD) \propto \exp(-U(\btheta))$
where we define $U(\btheta)$ to be \emph{the potential energy} $U(\btheta) = - \sum_{\bx \in D} \log
p(\bx | \btheta)  - \log p(\btheta)$. Many algorithms for solving this problem
such as Hamiltonian Monte Carlo \citep{duane-1987,neal2010} further augment this potential energy
with terms depending on auxiliary variables $\by \in \mathbb{R}^A$ to speed up sampling. In
this more general case we write the posterior as $p(\bz | \cD) \propto
\exp(-H(\bz))$, where $\bz = \lbrack \btheta, \by \rbrack \in \mathbb{R}^m$ denotes the
collection of all variables and $H(\bz)$ denotes the \emph{Hamiltonian}
$H(\bz) = H(\btheta,\by) = U(\btheta) + g(\btheta, \by)$. Samples from $p(\btheta |
\cD)$ can then be obtained by sampling from $p(\bz | \cD)$ and
discarding $\by$ if we additionally assume that marginalizing out
$\by$ from $p(\bz | \cD)$ results only in a constant offset $c$; that is
we require $\int_{\by \in \mathbb{R}^A} \exp(-g(\btheta,\by)) d\by = c$.

\subsection{Stochastic Gradient Hamiltonian Monte Carlo}
\label{sect:sgmcmc}
Stochastic gradient MCMC (SGMCMC) algorithms solve the above described sampling
problem by assuming access to a -- possibly noisy -- gradient of the
potential $U(\btheta)$ with respect to parameters $\btheta$. In this case --
assuming properly chosen auxiliary variables -- sampling can be
performed via an analogy to physics by simulating a system based on
the Hamiltonian $H(\bz)$. More precisely, following the general
formulation from \citet{ma-nips15}, one can simulate a stochastic
differential equation (SDE) of the form 
\begin{equation}
  d\bz = f(\bz) \diff t + \sqrt{2 D(\bz)} \diff \bw_t,
  \label{eq:diff_sgmcmc}
\end{equation}
where $f(\bz) : \mathbb{R}^m \rightarrow \mathbb{R}^m$ denotes the
deterministic drift incurred by $H(\bz)$, $D(\bz) : \mathbb{R}^m
\rightarrow \mathbb{R}^m \times \mathbb{R}^m$ is a diffusion
matrix, the square root is applied element-wise, and $\bw_t \in \mathbb{R}^m$ denotes Brownian motion. Under
fairly general assumptions on the functions $f$ and $D$ the unique stationary
distribution of this system is equivalent to the posterior distribution $p(\btheta |
\cD)$. Specifically, \citet{ma-nips15} showed that if $f(\bz)$ is of the following specialized form (in which we are free to choose $D(\bz)$ and $Q(\bz)$):
\begin{equation}
  f(\bz) = - \big ( D(\bz) + Q(\bz) \big ) \begin{pmatrix} \nabla_\btheta U(\btheta) +
    \nabla_\btheta g(\btheta, \by) \\ \nabla_y  g(\btheta, \by) 
  \end{pmatrix} + 
  \Gamma(\bz), \quad \Gamma_i(\bz) = \sum_{j=1}^m
  \frac{\partial}{\partial \bz_j} \left ( D_{i,j}(\bz) + Q_{i,j}(\bz) \right),
  \label{eq:specialized_diff}
\end{equation} 
then the stationary distribution is equivalent to the posterior if
$D(\bz)$ is positive semi-definite and $Q(\bz)$ is
skew-symmetric. Importantly, this holds also if only noisy
estimates $\nabla_\btheta \tilde{U}(\btheta)$ of the gradient
$\nabla_\btheta U(\btheta)$ computed on a randomly sampled subset of the data are
available (as in our case).

\subsubsection{Practical SGMCMC implementations}
\label{sect:sghmc}
In practice, for any choice of $H(\bz)$, $D(\bz)$ and $Q(\bz)$,
simulating the differential equation \eqref{eq:diff_sgmcmc} on a
digital computer involves two approximations. First, the SDE is
simulated in discretized steps resulting in the update rule
\begin{equation}
  \bz_{t+1} =\bz_{t} - \epsilon_t \Big[ \big ( D(\bz) + Q(\bz) \big ) \begin{pmatrix} \nabla_\btheta U(\btheta) -
    \nabla_\btheta g(\btheta, \by) \\ \nabla_y  g(\btheta, \by) 
  \end{pmatrix} - \Gamma(\bz) \Big ] + \cN \big( 0, 2\epsilon_t
  D(\bz_t) \big),
\label{eq:sgmcmc_updates}
\end{equation}
where we slightly abuse notation and take $\cN \big( \mu, \Sigma)$ to denote the addition of \emph{a sample from} an m-dimensional multivariate Gaussian distribution.
Second, when dealing with large datasets, exact computation of the
gradient $\nabla_\btheta U(\btheta)$ becomes computationally prohibitive and one
thus relies on a stochastic approximation computed on a randomly
sampled subset of the data: $\nabla_\btheta \tilde{U}(\btheta)$ with $
\tilde{U}(\bz) = \frac{N}{|\mathcal{B}|} \sum_{\bx \in \mathcal{B}}
\log p(\bx | \btheta)  - \log p(\btheta)$ where $\mathcal{B} \subset
\cD$. The stochastic gradient $\nabla_\btheta \tilde{U}(\btheta)$ is then Gaussian
distributed with some variance $V$; leading to the noise
term in the above described SDE.
Using these two approximations one can derive the following discretized system of equations for a stochastic gradient variant of Hamiltonian Monte Carlo
\begin{equation}
 \begin{aligned}
   \btheta_{t+1} &= \btheta_{t} + \epsilon \bM^{-1} \bp_t \\
   \bp_{t+1} &= \bp_{t} -\epsilon \nabla_\btheta \tilde{U}(\btheta_t) -
   \epsilon \bV \bM^{-1} \bp_t + \mathcal{N}(0, 2 \epsilon \bV),
 \end{aligned}
 \label{eq:sghmc}
\end{equation}
which, following \citet{ma-nips15}, can be seen as an instance of Equation \eqref{eq:sgmcmc_updates}
with $\by = \bp$, $g(\btheta, \bp) = \bp^T \bM^{-1} \bp$ and where $D(\bz) = \begin{pmatrix} 
  0 & 0 \\
  0 & \bV
                                     \end{pmatrix}$ and $Q(\bz) = \begin{pmatrix} 
  0 & \bI \\
  -\bI & \bV
                                     \end{pmatrix}$.

\section{Parallelization schemes for SG-MCMC}
\label{sect:parallel_schemes}
We now show how one can utilize the computational power of $K$ machines to speed up a given sampling procedure relying on the dynamics described in Equations \eqref{eq:sghmc}.
As mentioned before, the update equations derived in Section \ref{sect:sghmc} involve alternating updates to both the parameters and the uauxiliary variables, leading to an inherently sequential algorithm.
If we now want to utilize $K$ machines to
speed up sampling we thus face the non-trivial challenge of parallelizing these updates.
In general we can identify two solutions\footnote{We note that if the computation of the stochastic gradient $\nabla_\btheta \tilde{U}(\btheta)$ is based on a large number of data-points (or if we want to reduce the variance of our estimate by increasing $|\cB|$) we could potentially spread out the data-set $\cD$ over the $K$ machines allowing us to parallelize computation without the need for asynchronous updating. While this is an interesting problem in its own right and there already exists a considerable amount of literature on running parallel MCMC over sub-sets of data \cite{Scott2016,Rabi2015,Neiswanger14} we here focus on parallelization schemes that do not make this assumption because of their broad applicability.}:

\textbf{I)} For a naive parallelization strategy we can send the variables $\btheta$ to $K$ different machines every $s$ steps from a parameter server. Each machine then computes a gradient estimate for the current step $\nabla_\btheta \tilde{U}(\tilde{\btheta}^k_t)$ (note that we only approximately have $\tilde{\btheta}^k_t \approx \btheta_t$ due to the communication period $s$, i.e. $\tilde{\btheta}^k_t$ at each machine might be a stale parameter). The server then waits for $O$ gradient estimates $\nabla_\btheta \tilde{U}(\tilde{\btheta}^k_t)$ to be sent back and simulates the system from Eq. \eqref{eq:diff_sgmcmc} using $\nabla_\btheta \tilde{U}(\btheta_t) \approx \frac{1}{O} \sum_{k=1}^O \nabla_\btheta \tilde{U}(\tilde{\btheta}^k_t)$;

\textbf{II)} We can set-up $K$ MCMC chains (one per machine) which independently update a parameter vector $\bz^k$ (where $k \in [1, K]$ denotes the machine), following the dynamics from Eq. \eqref{eq:sghmc}.

While the second approach clearly results in Markov chains that asymptotically sample from the correct distribution -- and might result in a more diverse set of samples than a simulation using a single chain -- it is also clear that it cannot speed up convergence of the individual chains (our desired goal) as there is no interaction between them.
The first approach, on the other hand, is harder to analyze. We can observe that if $s = 1$ and we wait for $O = K$ gradient estimates in each step we obtain an SG-MCMC algorithm with parallel gradient estimates but synchronous updates of the dynamic equations. Consequently, such a setup preserves the guarantees of standard SG-MCMC but \emph{requires synchronization between all machines in each step}. In a real-world experiment (where we might have heterogeneous machines and communication delays) this will result in a large communication overhead. For choices of $s > 1$ and $O < K$ -- the regime we are interested in -- we cannot rely on the standard convergence analysis for SG-MCMC anymore. Nonetheless, if we concentrate on the analysis of $s > 1$ and $O = 1$ (i.e. completely asynchronous updates) we can interpret the stale parameters to simply result in more noisy estimates of $\nabla_\btheta \tilde{U}(\btheta_t)$ that can be used within the dynamic equations from Eq. \ref{eq:sghmc}. The efficacy of such a parallelization scheme then intuitively depends on the amount of additional noise introduced by the stale parameters and requires a new convergence analysis. During the preparation of this manuscript, concurrent work on SGMCMC with stale gradients derived a theoretical foundation for this intuition \citep{chen2016}.
Interestingly, we will in the following empirically show that the additional noise is unproblematic for small $s$ in the range $1 < s < 4$ (for which a convergence speed-up with naive parallelization can therefore be achieved, but which result in a large communication overhead in distributed systems) but becomes problematic with growing $s$ . 
We believe these results are in accordance with the mentioned recent work by \citet{chen2016}, yet a unification of their theory with our proposed new algorithm remains as important future work.

\section{Stochastic Gradient MCMC with Elastic Coupling}
\label{sect:method}
Given the negative analysis from Section \ref{sect:parallel_schemes} one might wonder whether approach \textbf{II)} (the idea of running $K$ parallel MCMC chains) can be altered such that the $K$ chains are only loosely coupled; allowing for faster convergence while avoiding excessive communication. To this end we propose to consider the following alternative parallelization scheme:

\textbf{IIa)} To speed up $K$ SGMCMC chains we couple the $K$ parameter vectors through an additional center variable $\bc$ to which they are elastically attached. We collect updates to this center variable at a central server and broadcast an updated version of it every $s$ steps across all machines. We note that an approach based on this idea was recently utilized to derive an asynchronous SGD optimizer in \citet{Zhang2015}, serving as our main inspiration. A discussion of the connection between their deterministic and our stochastic dynamics is presented in Section \ref{sect:easgd}.

To derive an asynchronous SGMCMC variant with $K$ samplers and elastic coupling -- as described above -- we consider
an augmented Hamiltonian with $\bz = \lbrack \btheta^1, \dots, \btheta^K,
\bp^1, \dots, \bp^K, \bc, \br \rbrack$:
\begin{equation}
 H(\bz) = \sum_{i=1}^K \Big ( U(\btheta^i) + {\bp^i}^T \bM^{-1} \bp^i \Big
 ) + \frac{1}{K} \sum_{i=1}^K \frac{\alpha}{2} \| \btheta^i - \bc \|^2_2 + {\br}^T \bM^{-1} \br,
 \label{eq:hamiltonian_ec}
\end{equation}
where we can interpret $\bc$ as a centering mass (with momentum $\br$)
through which the asynchronous samplers are elastically coupled and $\alpha$ specifies the coupling strength. It is easy to see that for $\alpha = 0$ we can decompose the sum from Eq. \eqref{eq:hamiltonian_ec} into $K$ independent terms which each constitute a Hamiltonian corresponding to the standard SGHMC case (and we thus recover the setup of $K$ independent MCMC chains). Further, for $\alpha > 0$ we obtain a joint, altered, Hamiltonian in which -- as desired -- the $K$ parameter vectors are elastically coupled. Simulating a system according to this Hamiltonian exactly would again result in an algorithm requiring synchronization in each simulation step (since the change in momentum for all parameters depends on the position of the center variable and thus, implicitly, on all other parameters).

If we, however, assume only a noisy measurement of the center variable and its momentum is available in each step we can derive a mostly asynchronous algorithm for updating each $\theta^i$. To achieve this let us assume we store our current estimate for the center variable and its momentum at a central server. This server receives updates for $\bc$ and $\br$ from each of the $K$ samplers every $s$ steps and replies with their current values. Assuming a Gaussian normal distribution on the error of this current value each sampler then keeps track of a noisy estimate $\tilde{\bc} \approx \bc$  of the center variable which is used for simulating updates derived from the Hamiltonian in Equation \eqref{eq:hamiltonian_ec}. 
From these assumptions we derive the following discretized dynamical equations:
\begin{equation}
 \begin{aligned}
   \btheta^i_{t+1} &= \btheta^i_{t} + \epsilon \bM^{-1} \bp^i_t, \\
   \bc_{t+1} &= \bc_{t} + \epsilon \bM^{-1} \br_t, \\
   \bp^i_{t+1} &= \bp^i_{t} -\epsilon \nabla \tilde{U}(\btheta^i_t) -
   \epsilon \bV \bM^{-1} \bp^i_t - \epsilon \alpha (\btheta^i_t - \bc_t)  +
   \mathcal{N}(0, 2 \epsilon^2 (\bV + \bC)), \\
   \br_{t+1} &= \br_{t} -
   \epsilon \bC \bM^{-1} \br_t - \epsilon \alpha
   \frac{1}{K} \sum_{i=1}^K (\bc_t - \btheta^i_t)  + \mathcal{N}(0, 2 \epsilon^2 \bC),
 \end{aligned}
 \label{eq:ec_sgmcmc}
\end{equation}
where $\bV$ specifies the noise due to stochastic gradient estimates and $\bC$ is the variance of the aforementioned noisy center variable. As before, we use the notation $\mathcal{N}(\mu, \Sigma)$ to refer to a sample from a Gaussian distribution with mean $\mu$ and covariance $\Sigma$. We note that, while the presented dynamical equations were derived from SGHMC, the elastic coupling idea does not depend on the basic Hamiltonian from Equation \eqref{eq:sghmc}. We can thus derive similar asynchronous samplers for any SGMCMC variant including first order stochastic Langevin dynamics \citep{WelTeh2011a} or any of the more advanced techniques reviewed in \citet{ma-nips15}.

When inspecting the Equations \eqref{eq:ec_sgmcmc} we can observe that, similar to approach \textbf{I}, they also contain an additional noise source: noise is injected into the system due to potential staleness of the center variable. However, since this noise only indirectly affects the parameters $\btheta^i$ one might hope that the center variable acts as a buffer, damping the injected noise. If this were the case, we would expect the resulting algorithm to be more robust to communication delays than the naive parallelization approach. In addition to this consideration, the proposed dynamical equations have a convenient form which makes it easy to verify that they fulfill the conditions for a valid SGMCMC procedure.

\begin{proposition}
The dynamics of the system from Eq. \eqref{eq:ec_sgmcmc} has the posterior distribution
$p(\btheta \mid \cD)$ as the stationary
distribution for all $K$ samplers.
\end{proposition}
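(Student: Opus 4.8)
The plan is to reduce the claim to the complete recipe of \citet{ma-nips15} recorded in Equation~\eqref{eq:specialized_diff}: it suffices to exhibit a positive semi-definite diffusion matrix $D(\bz)$ and a skew-symmetric matrix $Q(\bz)$ whose substitution into that equation reproduces the deterministic drift of the continuous-time limit of Equation~\eqref{eq:ec_sgmcmc}, together with a check that the injected noise reproduces the prescribed diffusion. First I would fix the state ordering $\bz = [\btheta^1,\dots,\btheta^K,\bp^1,\dots,\bp^K,\bc,\br]$ and differentiate the augmented Hamiltonian of Equation~\eqref{eq:hamiltonian_ec}, obtaining $\nabla_{\btheta^i}H = \nabla U(\btheta^i) + \tfrac{\alpha}{K}(\btheta^i-\bc)$, $\nabla_{\bp^i}H = \bM^{-1}\bp^i$, $\nabla_{\bc}H = \tfrac{\alpha}{K}\sum_{i=1}^K(\bc-\btheta^i)$ and $\nabla_{\br}H = \bM^{-1}\br$ (reading the quadratic terms with the usual $\tfrac12$ convention).

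Next I would write $D$ and $Q$ in block form indexed by these coordinate groups. I expect $D$ to be block-diagonal with zero blocks on the positions $\btheta^i,\bc$, a block $\bV$ on each momentum $\bp^i$ and a block $\bC$ on $\br$, so that positive semi-definiteness follows from $\bV\succeq0$ and $\bC\succeq0$. For $Q$ I expect the canonical symplectic pattern, with $\pm\bI$ blocks linking each position to its conjugate momentum ($\btheta^i$ with $\bp^i$, and $\bc$ with $\br$) and zeros elsewhere, so that skew-symmetry is immediate. With these choices I would verify term by term that $-(D+Q)\nabla H$ reproduces the four drifts of Equation~\eqref{eq:ec_sgmcmc}: the symplectic block feeds $\nabla_{\btheta^i}H$ into the $\bp^i$ equation, producing both the force $\nabla U(\btheta^i)$ and the elastic term proportional to $(\btheta^i-\bc)$, and feeds $\nabla_{\bc}H$ into the $\br$ equation. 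Since $D$ and $Q$ are constant in $\bz$ the correction $\Gamma(\bz)$ vanishes identically, removing the usual bookkeeping. This step also requires matching the coupling-constant scaling between the $\bp^i$ and $\br$ equations and confirming that the injected noise reproduces the prescribed diffusion, in particular accounting for the additional center-variable contribution $\bC$ that enters the $\bp^i$ covariance because the elastic force is evaluated at the noisy estimate $\tilde{\bc}$.

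It remains to pass from the joint stationary law $\propto\exp(-H(\bz))$ granted by these three conditions to a statement about the parameters alone. Following the reduction in the introduction, this amounts to verifying the condition $\int\exp(-g)\,d\by = c$ for the auxiliary variables $\{\bp^i\},\br,\bc$, so that integrating them out of $\exp(-H)$ leaves $\prod_{i=1}^K\exp(-U(\btheta^i))$ up to a constant. The momentum integrals are trivial Gaussians, so the crux is the center integral $\int\exp\!\big(-\tfrac{\alpha}{2K}\sum_{i=1}^K\|\btheta^i-\bc\|_2^2\big)\,d\bc$.

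I expect this last step to be the main obstacle. Completing the square in $\bc$ shows the integral equals a constant times $\exp\!\big(-\tfrac{\alpha}{2K}\sum_{i=1}^K\|\btheta^i-\bar{\btheta}\|_2^2\big)$ with $\bar{\btheta}=\tfrac1K\sum_{i=1}^K\btheta^i$, so a residual factor survives that still couples the chains; the marginalization condition therefore fails exactly, and the marginal of an individual $\btheta^i$ is not literally the posterior $\exp(-U(\btheta^i))$. I would close the argument by controlling this residual through its $\tfrac1K$ prefactor --- it weakens as $\alpha\to0$ or as $K$ grows and perturbs each marginal only mildly otherwise --- thereby pinning down the precise sense in which every sampler targets $p(\btheta\mid\cD)$. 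Converting this approximate factorization into a clean quantitative statement is, I anticipate, the part of the proof that demands the most care.
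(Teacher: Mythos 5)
Your reduction to the framework of \citet{ma-nips15} is, step for step, the paper's own first move: the same block-diagonal $D$ (zeros on positions and on $\bc$, $\bV$ on each $\bp^i$, $\bC$ on $\br$), the same canonical symplectic $Q$, and the same observation that $\Gamma\equiv 0$ since $D$ and $Q$ are constant. The divergence is entirely in the marginalization step, and there your computation is the correct one and the paper's is not. The paper asserts that $\int \exp(-g(\btheta^i,\by))\,\diff\by$ with $\by=[\bc,\br,\bp]$ is constant ``by Gaussian integrals,'' which implicitly integrates the center variable out once \emph{per sampler}. But $\bc$ is shared: it is coupled to all $K$ parameter vectors simultaneously, so it can be integrated out only once, against the full coupling term, and completing the square leaves exactly the residual you exhibit,
\begin{equation*}
\int \exp\Bigl(-\tfrac{\alpha}{2K}\textstyle\sum_{i=1}^K\|\btheta^i-\bc\|_2^2\Bigr)\diff\bc \;\propto\; \exp\Bigl(-\tfrac{\alpha}{2K}\textstyle\sum_{i=1}^K\|\btheta^i-\bar{\btheta}\|_2^2\Bigr), \qquad \bar{\btheta}=\tfrac{1}{K}\textstyle\sum_{i=1}^K\btheta^i,
\end{equation*}
which still depends on and couples the $\btheta^i$. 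Consequently the stationary marginal of each chain equals $\exp(-U(\btheta^i))$ only up to this factor: the proposition holds exactly at $\alpha=0$ (independent chains) and otherwise only approximately, with bias of order $\alpha/K$ (for a standard Gaussian $U$ one can compute the stationary variance of each $\btheta^i$ to be $(K^2+\alpha)/(K^2+\alpha K)<1$). So the gap is not in your proposal; it is in the paper, which dismisses as ``easily checked'' precisely the step you correctly identify as failing.

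Two secondary inconsistencies you flag in passing are also real and unacknowledged in the paper's proof: the $\bp^i$ update in Eq.~\eqref{eq:ec_sgmcmc} applies the force $\alpha(\btheta^i_t-\bc_t)$ whereas the Hamiltonian's gradient is $\tfrac{\alpha}{K}(\btheta^i-\bc)$ (the $\br$ update, by contrast, uses the $1/K$-scaled version), and the injected noise $2\epsilon^2(\bV+\bC)$ on $\bp^i$ does not match the $2\epsilon D$ covariance prescribed by the framework given that the friction term is $\bV\bM^{-1}\bp^i$ alone. If you carry out your plan of making the residual quantitative, the Gaussian case above is the natural vehicle, but be aware that no choice of constants rescues the exact claim for $\alpha>0$; the honest statement is convergence of each marginal to the posterior as $\alpha\to 0$ or $K\to\infty$, not equality.
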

\begin{proof}
To show this, we first establish that Equations \eqref{eq:ec_sgmcmc} correspond to a discretized dynamics following the general form given by Equations \eqref {eq:diff_sgmcmc} and \eqref{eq:specialized_diff} where $D(\bz) = \text{diag}([0, \bV, 0, \bC])$ 
and 
$Q(\bz) = \begin{pmatrix} 
   A & B \\
   B & A \end{pmatrix}$, with $A = \begin{pmatrix} 0 & \bI \\
-\bI & 0 \end{pmatrix}$ and $B = \begin{pmatrix} 0 & 0 \\
0 & 0 \end{pmatrix}$, thus fulfilling the requirements outlined in Section \ref{sect:sgmcmc}. Furthermore, to see that marginalization of the auxiliary variables results in a constent offset, we can first identify $g(\btheta^i,\by) = \nicefrac{\alpha}{2K} \| \btheta^i - \bc \|^2_2 + {\br}^T \bM^{-1} \br + {\bp^i}^T \bM^{-1} \bp^i$, with $\by = [\bc, \br, \bp]$. Solving the integral $\int_{\by \in \mathbb{R}^A} \exp(-g(\btheta^i,\by)) d\by$ then amounts to evaluating Gaussian integrals and is therefore easily checked to be constant, as required. Thus, simulating the the dynamical equations results in samples from $p(\btheta, \by \mid \cD)$ and discarding the auxilliary variables $\by$ gives the desired result.
\end{proof}
\begin{figure}[!h]
  \vspace{-0.3cm}
  \centering
  \includegraphics[width=0.49\textwidth]{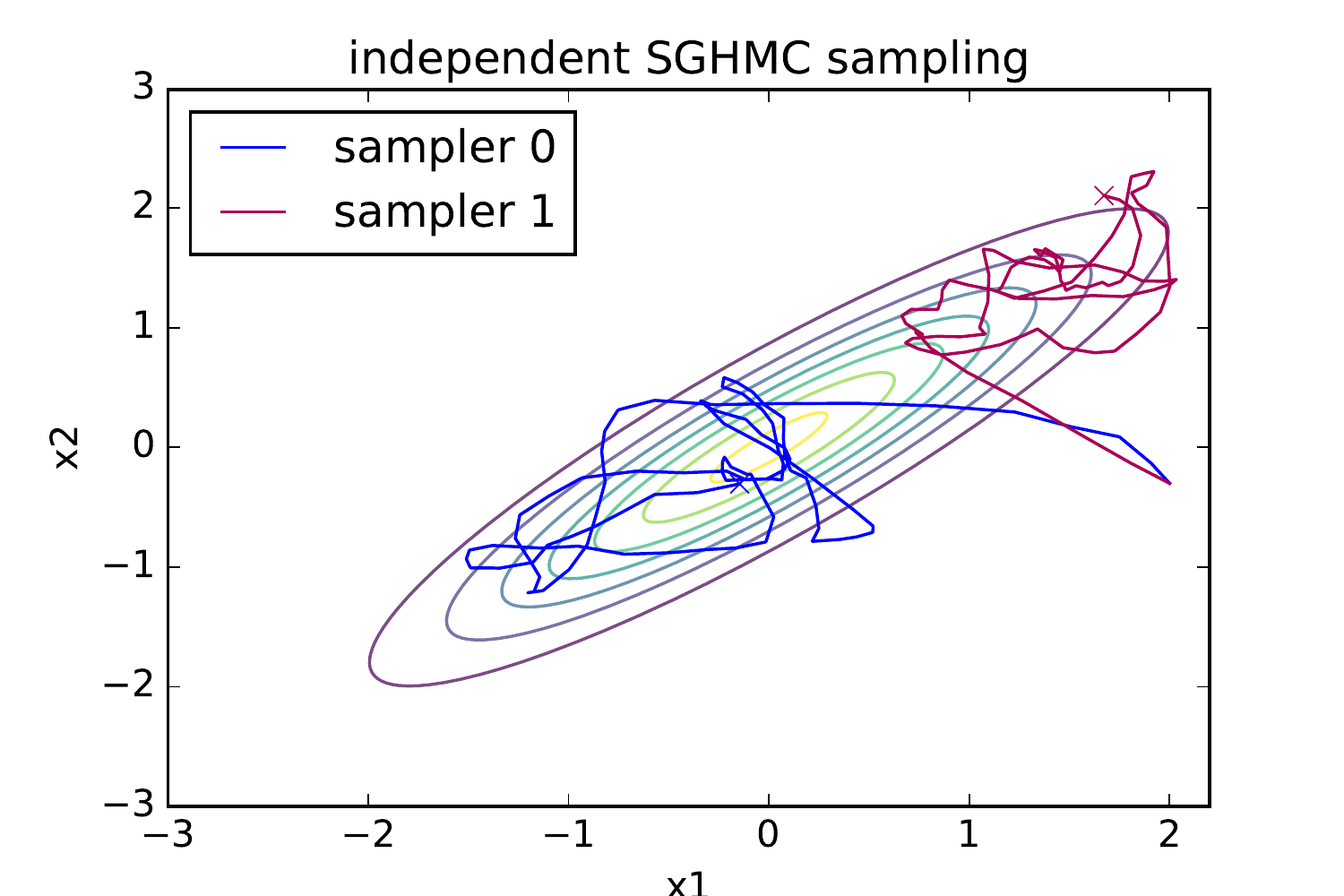}
  \includegraphics[width=0.49\textwidth]{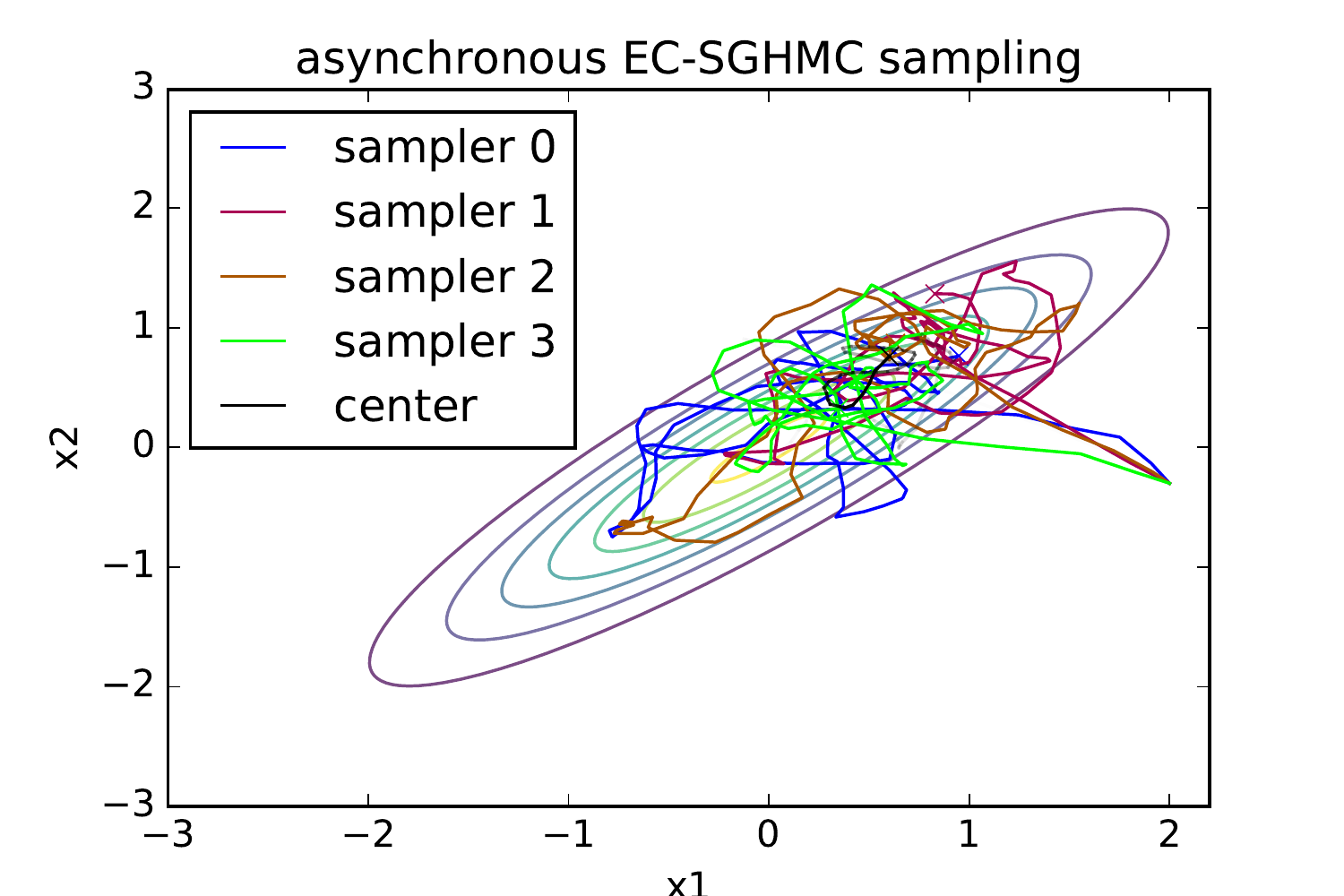}
\caption{Comparison between the first 100 sampling steps performed by SGHMC and the elastically coupled SGHMC variant for sampling from a simple two dimensional Gaussian distribution. An animated video of the samplers can be found at \url{https://goo.gl/ZZv1fG}.}
\label{fig:toy_example}
  \vspace{-0.2cm}
\end{figure}

\section{First Experiments}
We designed an initial set of three experiments to validate our sampler. First, to get an intuition for the behavior of the elastic coupling term we tested the sampler on a low dimensional toy example. In Figure \ref{fig:toy_example} we show the first 100 steps taken by both standard SGHMC (left) and our elastically coupled sampler (EC-SGHMC) with four parallel threads (right) when sampling from a two-dimensional Gaussian distribution (starting from the same initial guess). The hyperparameters were set to $\alpha = 1$, $\epsilon = 1e-2$, $\bC = \bV = \mathbf{I}$. We observe that two independent runs of SGHMC take fairly different initial paths and, depending on the noise, it can happen that SGHMC only explores low-density regions of the distribution in its first steps (cf. purple curve). In contrast the four samplers with elastic coupling quickly sample from high density regions and show coherent behaviour.

As a second experiment, we compare EC-SGHMC to standard SGHMC and the naive parallelization described in Section \ref{sect:parallel_schemes} (Async SGHMC in the following). We use each method for sampling the weights of a two layer fully connected neural network (800 units per layer, ReLU activations, Gaussian prior on the weights, batch size $100$) that is applied to classifying the MNIST dataset. For our purposes, we can interpret the neural network as a function that  parameterizes a probability distribution over classes
\begin{equation}
  p(y = i | \bx, \theta) \propto \exp(f_i(\bx, \theta)),
\end{equation}
where $y$ is the label of the given image and $f(\bx, \theta)$ is the output vector of the neural network with parameters $\theta$. We place a Gaussian prior on the network weights $p(\theta) \propto exp(\lambda \| \theta \|^2_2)$ (where we chose $\lambda = 10^{-5}$) and sample from the posterior
\begin{equation}
  p(\theta | \mathcal{D}) \propto \prod_{(\bx^j,y^j) \in \mathcal{D}} p(y^j | \bx^j, \theta) p(\theta),
  \label{eq:nn_post}
\end{equation}
for a given dataset $\mathcal{D}$. The results of this experiment are depicted in Figure \ref{fig:nets} (left). We plot the negative log likelihood over time and observe that both parallel samplers (using $K=6$ parallel threads) perform significantly better than standard SGHMC. However, when we increase the communication delay and only synchronize threads every $s = 8$ steps the additional noise injected into the sampler via this procedure becomes problematic for Async SGHMC whereas EC-SGHMC copes with it much more gracefully.

\begin{figure}[!h]
\vspace{-0.2cm}
\centering
\includegraphics[width=0.49\textwidth]{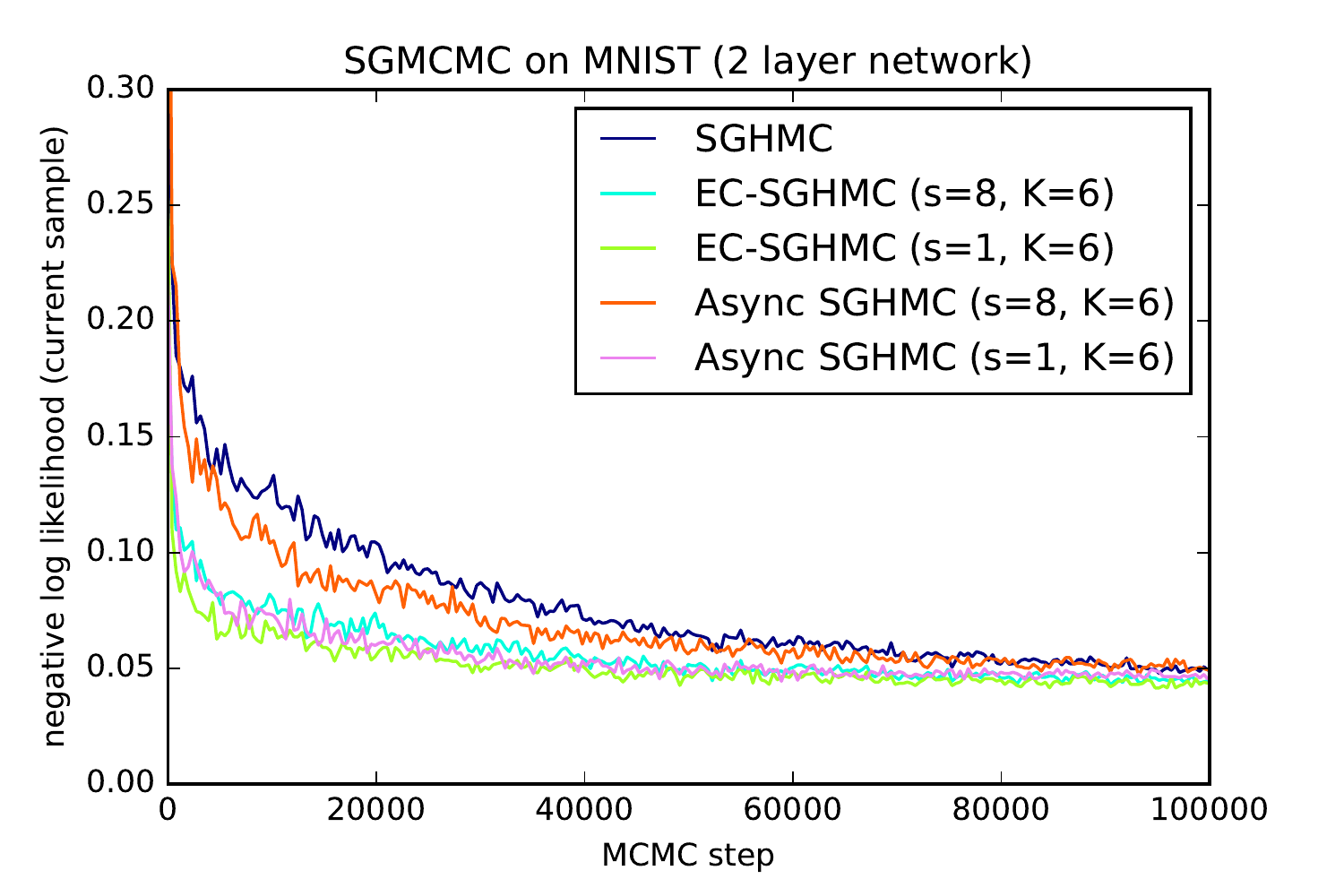}
\includegraphics[width=0.49\textwidth]{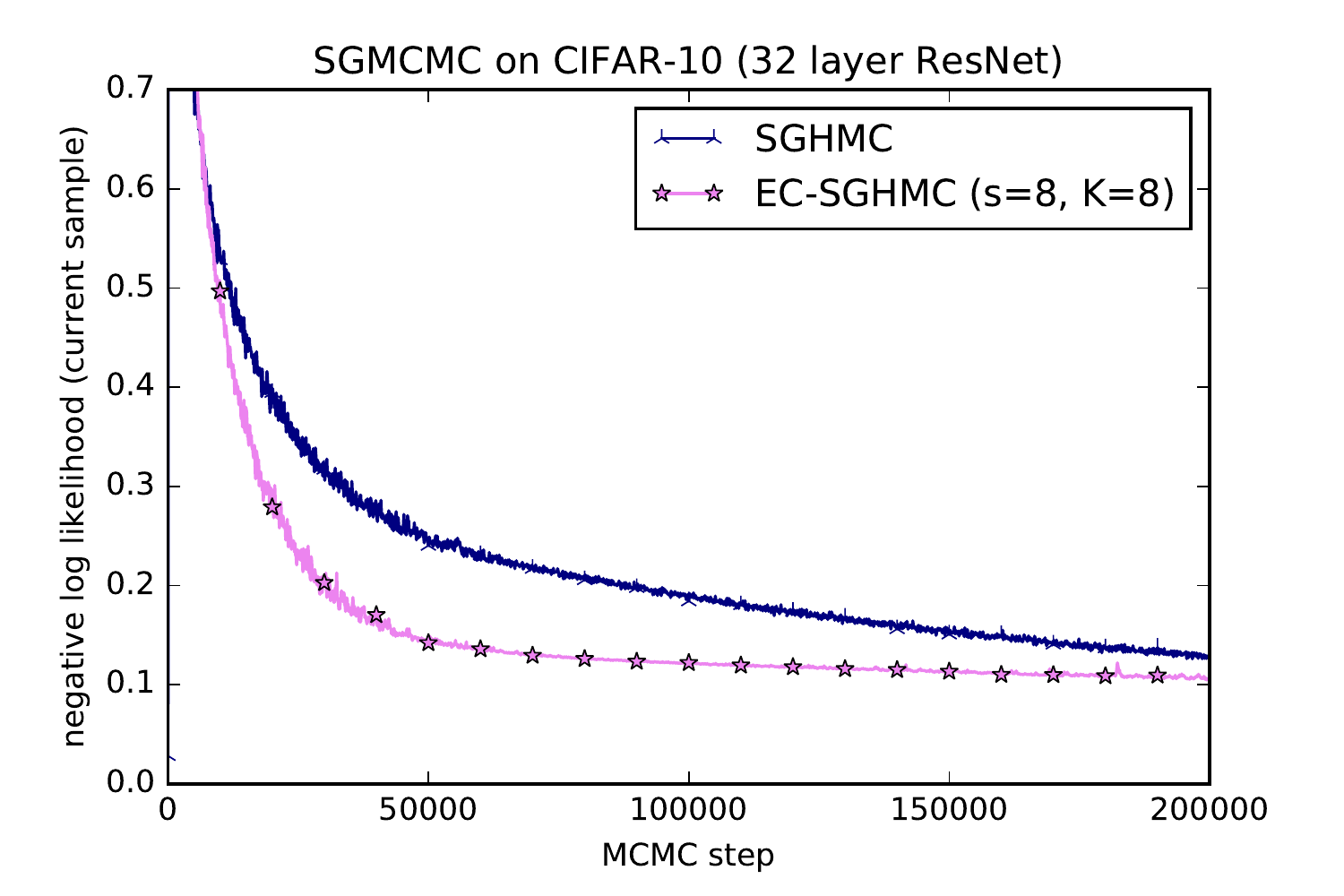}
\caption{Comparison between different SGMCMC samplers for sampling from the posterior over neural network weights for a fully connected network on MNIST (left) and a residual network on CIFAR-10 (right). Best viewed in color.}
\label{fig:nets}
\end{figure}

Finally, to test the scalability of our approach, we sampled the weights of a 32-layer residual net applied to the CIFAR-10 dataset. We again aim to sample the posterior given by Equation \eqref{eq:nn_post} only now the neural network $f(\bx, \theta)$ is the 32-layer residual network described in \citet{he-16cvpr} (with batch-normalization removed). The results of this experiment are depicted in Figure \ref{fig:nets} (right) showing that, again, EC-SGHMC leads to a significant speed-up over standard SGHMC sampling.

\section{Connection to Elastic Averaging SGD}
\label{sect:easgd}
As described in Section \ref{sect:method} an elastic coupling technique similar to the one used in this manuscript was recently used to accelerate asynchronous stochastic gradient descent \citep{Zhang2015}. Although the purpose of our paper is not to derive a new SGD method -- we instead aim to arrive at a scalable MCMC sampler -- it is instructive to take a closer look at the connection between the two methods.

To establish this connection we aim to re-derive the elastic averaging SGD (EASGD) method from \citet{Zhang2015} as the deterministic limit of the dynamical equations from \eqref{eq:ec_sgmcmc}. Removing the added noise from these equations, setting $\bM$ to the identity matrix, and performing the variable substitutions $\bv^i = \epsilon \bM p^i$, $\bh^i = \epsilon \bM r^i$, $\xi = \bV = \bC$ yields the dynamical equations
\begin{equation}
 \begin{aligned}
   \btheta^i_{t+1} &= \btheta^i_{t} + \bv^i_t, \\
   \bc_{t+1} &= \bc_{t} + \bh_t, \\
   \bv^i_{t+1} &= \bp^i_{t} -\epsilon \nabla \tilde{U}(\btheta^i_t) -
   \xi \bv^i_t - \epsilon \alpha (\btheta^i_t - \bc_t), \\
   \bh_{t+1} &= \br_{t} - \xi \bh_t - \epsilon \alpha
   \frac{1}{K} \sum_{i=1}^K (\bc_t - \btheta^i_t).
 \end{aligned}
 \label{eq:sgmcmc_eamsgd}
\end{equation}
In comparison, re-writing the updates for the EASGD variant with momentum (EAMSGD) from \citet{Zhang2015} in our notation (and replacing the Nesterov momentum with a standard momentum) we obtain
\begin{equation}
 \begin{aligned}
   \btheta^i_{t+1} &= \btheta^i_{t} + \bv^i_t - \epsilon \alpha (\btheta^i_t - \bc_t), \\
   \bc_{t+1} &= \bc_{t} - \epsilon \alpha
   \frac{1}{K} \sum_{i=1}^K (\bc_t - \btheta^i_t), \\
   \bv^i_{t+1} &= \bp^i_{t} -\epsilon \nabla \tilde{U}(\btheta^i_t) -
   \xi \bv^i_t,
 \end{aligned}
 \label{eq:eamsgd}
\end{equation}
where, additionally, \citet{Zhang2015} propose to only update $\bc_{t+1}$ every $s$ steps and drop the terms including $\bc_{t+1}$ in all update equations in intermittent steps.

As expected, at first glance the two sets of update equations look very similar. Interestingly, they do however differ with respect to integration of the elastic coupling term and the updates to the center variable: In the EAMSGD Equations \eqref{eq:eamsgd} the center variables are not augmented with a momentum term and the elastic coupling force influences the parameter values $\theta$ directly rather than, indirectly, through their momentum $\bp$.

From the physics perspective that we adopt in this paper these updates are thus ``wrong'' in the sense that they break the interpretation of the variables $\theta$, $\bc$ and $\bp$ as generalized coordinates and generalized momenta. It should be noted that there also is no straight-forward way to recover a valid SGMCMC sampler corresponding to a stochastic variant of Equations \eqref{eq:eamsgd} from the Hamiltonian given in Equation \eqref{eq:hamiltonian_ec}. Our derivation thus suggests alternative update equations for EAMSGD. An interesting avenue for future experiments thus is to thoroughly compare the deterministic updates from Equations \eqref{eq:sgmcmc_eamsgd} with the EAMSGD updates both in terms of empirical performance and with respect to their convergence properties. An initial test we performed suggests that the former perform at least as good as EAMSGD. We also note that EASGD without momentum can exactly be recovered as the deterministic limit of our approach (without the above described discrepancies) if we were to randomly re-sample the auxilliary momentum variables in each step -- and would thus simulate stochastic gradient Langevin dynamics \cite{WelTeh2011a}.

\section{Conclusion}
In this manuscript we have considered the problem of parallel asynchronous MCMC sampling with stochastic gradients. We have introduced a new algorithm for this problem based on the idea of elastically coupling multiple SGMCMC chains. First experiments suggest that the proposed method compares favorably to a naive parallelization strategy but additional experiments are required to paint a conclusive picture. We have further discussed the connection between our method and the recently proposed stochastic averaging SGD (EASGD) optimizer from \citet{Zhang2015}, revealing an alternative variant of EASGD with momentum.

\bibliographystyle{unsrtnat}
{
\small
\bibliography{async_sgmcmc}

\begin{thebibliography}{16}
\providecommand{\natexlab}[1]{#1}
\providecommand{\url}[1]{\texttt{#1}}
\expandafter\ifx\csname urlstyle\endcsname\relax
  \providecommand{\doi}[1]{doi: #1}\else
  \providecommand{\doi}{doi: \begingroup \urlstyle{rm}\Url}\fi

\bibitem[Recht et~al.(2011)Recht, Re, Wright, and Niu]{Recht2011}
Benjamin Recht, Christopher Re, Stephen Wright, and Feng Niu.
\newblock Hogwild: A lock-free approach to parallelizing stochastic gradient
  descent.
\newblock In \emph{Proc. of NIPS'11}, 2011.

\bibitem[Dean et~al.(2012)Dean, Corrado, Monga, Chen, Devin, Mao, Ranzato,
  Senior, Tucker, Yang, Le, and Ng]{Dean2012}
Jeffrey Dean, Greg Corrado, Rajat Monga, Kai Chen, Matthieu Devin, Mark Mao,
  Marc'aurelio Ranzato, Andrew Senior, Paul Tucker, Ke~Yang, Quoc~V. Le, and
  Andrew~Y. Ng.
\newblock Large scale distributed deep networks.
\newblock In \emph{Proc. of NIPS'12}, 2012.

\bibitem[Zhang et~al.(2015)Zhang, Choromanska, and LeCun]{Zhang2015}
Sixin Zhang, Anna~E. Choromanska, and Yann LeCun.
\newblock Deep learning with elastic averaging {SGD}.
\newblock In \emph{Proc. of NIPS'15}, 2015.

\bibitem[Ahmed et~al.(2012)Ahmed, Aly, Gonzalez, Narayanamurthy, and
  Smola]{Ahmed2012}
Amr Ahmed, Moahmed Aly, Joseph Gonzalez, Shravan Narayanamurthy, and
  Alexander~J. Smola.
\newblock Scalable inference in latent variable models.
\newblock In \emph{Proc. of WSDM '12}, 2012.

\bibitem[Ahn et~al.(2015)Ahn, Korattikara, Liu, Rajan, and Welling]{Ahn2015}
Sungjin Ahn, Anoop Korattikara, Nathan Liu, Suju Rajan, and Max Welling.
\newblock Large-scale distributed {B}ayesian matrix factorization using
  stochastic gradient {MCMC}.
\newblock In \emph{Proc. of KDD'15}, 2015.

\bibitem[Simsekli et~al.(2105)Simsekli, Koptagel, G\"uldas, Cemgil, \"Oztoprak,
  and Birbil]{Simsekli2015}
Umut Simsekli, Hazal Koptagel, Hakan G\"uldas, A.~Taylan Cemgil, Figen
  \"Oztoprak, and S.~Ilker Birbil.
\newblock Parallel stochastic gradient {M}arkov {C}hain {M}onte {C}arlo for
  matrix factorisation models.
\newblock In \emph{arxiv:1506.01418}, 2105.

\bibitem[Ahn et~al.(2014)Ahn, Shahbaba, and Welling]{AhnSW14}
Sungjin Ahn, Babak Shahbaba, and Max Welling.
\newblock Distributed stochastic gradient {MCMC}.
\newblock In \emph{Proc. of ICML'14}, 2014.

\bibitem[Chen et~al.(2016)Chen, Ding, Li, Zhang, and Carin]{chen2016}
Changyou Chen, Nan Ding, Chunyuan Li, Yizhe Zhang, and Lawrence Carin.
\newblock Stochastic gradient {MCMC} with stale gradients.
\newblock In \emph{Proc. of NIPS'16}, 2016.

\bibitem[Duane et~al.(1987)Duane, Kennedy, Pendleton, and Roweth]{duane-1987}
Simon Duane, Anthony~D. Kennedy, Brian~J. Pendleton, and Duncan Roweth.
\newblock Hybrid {M}onte {C}arlo.
\newblock \emph{Phys. Lett. B}, 1987.

\bibitem[Neal(2010)]{neal2010}
Radford~M. Neal.
\newblock {MCMC} using {Hamiltonian} dynamics.
\newblock \emph{Handbook of Markov Chain Monte Carlo}, pages 113--162, 2010.

\bibitem[Ma et~al.(2015)Ma, Chen, and Fox]{ma-nips15}
Yi-An Ma, Tuanqi Chen, and Emily~B. Fox.
\newblock A complete recipe for stochastic gradient {MCMC}.
\newblock In \emph{Proc. of NIPS'15}, 2015.

\bibitem[Scott et~al.(2016)Scott, Blocker, and Bonassi]{Scott2016}
Steven~L. Scott, Alexander~W. Blocker, and Fernando~V. Bonassi.
\newblock Bayes and big data: The consensus monte carlo algorithm.
\newblock \emph{International Journal of Management Science and Engineering
  Management}, 2016.

\bibitem[Rabinovich et~al.(2015)Rabinovich, Angelino, and Jordan]{Rabi2015}
Maxim Rabinovich, Elaine Angelino, and Michael~I Jordan.
\newblock Variational consensus monte carlo.
\newblock In \emph{Proc. of {NIPS}}, 2015.

\bibitem[Neiswanger et~al.(2014)Neiswanger, Wang, and Xing]{Neiswanger14}
Willie Neiswanger, Chong Wang, and Eric~P. Xing.
\newblock Asymptotically exact, embarrassingly parallel {MCMC}.
\newblock In \emph{Proc. of {UAI}}, 2014.

\bibitem[Welling and Teh(2011)]{WelTeh2011a}
Max Welling and Yee~Whye Teh.
\newblock {B}ayesian learning via stochastic gradient {L}angevin dynamics.
\newblock In \emph{Proc. of {ICML}}, 2011.

\bibitem[He et~al.(2016)He, Zhang, Ren, and Sun]{he-16cvpr}
Kaiming He, Xiangyu Zhang, Shaoqing Ren, and Jian Sun.
\newblock Deep residual learning for image recognition.
\newblock In \emph{Proc. of CVPR'16}, 2016.

\end{thebibliography}
}

\end{document}